\def\eqref#1{equation~\ref{#1}}
\def\1{\bm{1}}
\DeclareMathAlphabet{\mathsfit}{\encodingdefault}{\sfdefault}{m}{sl}
\SetMathAlphabet{\mathsfit}{bold}{\encodingdefault}{\sfdefault}{bx}{n}
\newcommand{\E}{\mathbb{E}}
\newcommand{\R}{\mathbb{R}}
\DeclareMathOperator*{\argmax}{arg\,max}
\newcommand*\system{\textsc{NZC}}
\newtheorem{thm}{Theorem}
\newtheorem{definition}{Definition}
\newcommand{\bv}{\boldsymbol{v}}
\newcommand{\cD}{\mathcal{D}}
\newcommand{\cR}{\mathcal{R}}
\newcommand{\cM}{\mathcal{M}}
\newcommand{\bbZ}{\mathbb{Z}}
\newcommand{\bbN}{\mathbb{N}}
\newcommand*\PATE{\textsc{PATE}}
\newcommand*\DPSGD{\textsc{DP-SGD}}
  \providecommand\BibTeX{{%
    \normalfont B\kern-0.5em{\scshape i\kern-0.25em b}\kern-0.8em\TeX}}}
\begin{document}

\title{Differentially Private Deep Learning with Smooth Sensitivity}

\author{Lichao Sun}
\orcid{1234-5678-9012}
\affiliation{%
  \institution{University of Illinois at Chicago}
}
\email{james.lichao.sun@gmail.com}

\author{Yingbo Zhou}
\affiliation{%
  \institution{Salesforce Research}
}
\email{yingbo.zhou@salesforce.com }

\author{Philip S. Yu}
\affiliation{%
  \institution{University of Illinois at Chicago}
}
\email{psyu@uic.edu}

\author{Caiming Xiong}
\affiliation{%
 \institution{Salesforce Research}
}
\email{cxiong@salesforce.com}





\renewcommand{\shortauthors}{Trovato and Tobin, et al.}

\begin{abstract}
Ensuring the privacy of sensitive data used to train modern machine learning models is of paramount importance in many areas of practice.
One approach to study these concerns is through the lens of differential privacy. 
In this framework, privacy guarantees are generally obtained by perturbing models in such a way that specifics of data used to train the model are made ambiguous. 
A particular instance of this approach is through a ``teacher-student'' framework, wherein the teacher, who owns the sensitive data, provides the student with useful, but noisy, information, hopefully allowing the student model to perform well on a given task without access to particular features of the sensitive data.
Because stronger privacy guarantees generally involve more significant perturbation on the part of the teacher, deploying existing frameworks fundamentally involves a trade-off between student's performance and privacy guarantee.
One of the most important techniques used in previous works involves an ensemble of teacher models, which return information to a student based on a noisy voting procedure. 
In this work, we propose a novel voting mechanism with smooth sensitivity, which we call Immutable Noisy ArgMax, that, under certain conditions, can bear very large random noising from the teacher without affecting the useful information transferred to the student.

Compared with previous work, our approach improves over the state-of-the-art methods on all measures, and scale to larger tasks with both better performance and stronger privacy ($\epsilon \approx 0$).
This new proposed framework can be applied with any machine learning models,
and provides an appealing solution for tasks that requires training on a large amount of data.
\end{abstract}

\begin{CCSXML}
<ccs2012>
   <concept>
       <concept_id>10002978.10003029.10011703</concept_id>
       <concept_desc>Security and privacy~Usability in security and privacy</concept_desc>
       <concept_significance>500</concept_significance>
       </concept>
   <concept>
       <concept_id>10002978.10003029.10011150</concept_id>
       <concept_desc>Security and privacy~Privacy protections</concept_desc>
       <concept_significance>500</concept_significance>
       </concept>
   <concept>
       <concept_id>10010147.10010257.10010293.10010294</concept_id>
       <concept_desc>Computing methodologies~Neural networks</concept_desc>
       <concept_significance>300</concept_significance>
       </concept>
 </ccs2012>
\end{CCSXML}

\ccsdesc[500]{Security and privacy~Usability in security and privacy}
\ccsdesc[500]{Security and privacy~Privacy protections}
\ccsdesc[300]{Computing methodologies~Neural networks}
\keywords{privacy, differential privacy, smooth sensitivity, teacher-student learning, data-dependent analysis}


\maketitle

\section{Introduction}

Recent years have witnessed impressive breakthroughs of deep learning in a wide variety of domains, such as image classification~\citep{he2016deep}, natural language processing~\citep{devlin2018bert}, and many more. Many attractive applications involve training models using highly sensitive data, to name a few, diagnosis of diseases with medical records or genetic sequences~\citep{alipanahi2015predicting}, mobile commerce behavior prediction \citep{yan2017mobile}, and location-based social network activity recognition \citep{gong2018deepscan}. 
In fact, many applications lack labeled sensitive data, which makes it challenging to build high performance models.
This may require the collaboration of two parties such that one party helps the other to build the machine learning model. For example in a ``teacher-student'' framework, where the teacher owns the sensitive data and well-trained model to transfer its knowledge and help the student to label the unlabeled dataset from the student side.
However, recent studies exploiting privacy leakage from deep learning models have demonstrated that private, sensitive training data can be recovered from released models \citep{Papernot2017}. Therefore, privacy protection is a critical issue in this context, and thus developing methods that protect sensitive data from being disclosed and exploited is critical.

\begin{figure*}[t]
\centering
\includegraphics[width=5.1in]{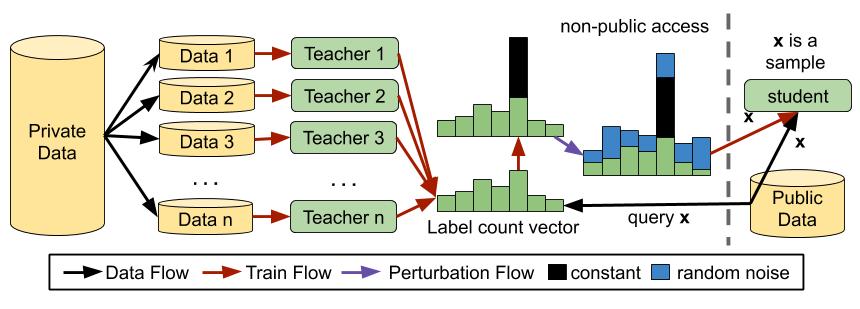}
\caption{Overview of the proposed approach: first, the private data is partition to non-overlapping splits to train the teacher models. To train the student model, the ensemble of teachers then aggregate their predictions on the queried example from the student, followed by adding a large constant on the highest counted class vote. The count vector then gets randomly perturbed with noise followed by an ArgMax operation. Finally, the student model is trained by using the returned label from the teacher ensemble.}
\label{fig:framework}
\end{figure*}

In order to protect the privacy of the training data and mitigate the effects of adversarial attacks, various privacy protection works have been proposed in the literature \citep{michie1994machine, nissim2007smooth, samangouei2018defense, ma2018pdlm}. The ``teacher-student'' learning framework with privacy constraints is of particular interest here, since it can provide a private student model without touching any sensitive data directly \citep{hamm2016learning,pathak2010multiparty,papernot2017semi}. The original purpose of a teacher-student framework is to transfer the knowledge from the teacher model to help train a student to achieve similar performance with the teacher. 
To satisfy the privacy-preserving need, knowledge from the teacher model is carefully perturbed with random noise, before being passed to the student model.
In this way, one hopes that an adversary cannot ascertain the contributions of specific individuals in the original dataset even they have full access to the student model. Using the techniques of differential privacy, such protection can be guaranteed in certain settings.
However, the current teacher-student frameworks (e.g. \cite{Papernot2017} and \citep{papernot2018scalable}) involve a trade-off between student's performance and privacy. This is because the amount of noise perturbation required is substantial to ensure privacy at the desired level, which leads to degraded information passed to the student and results in sub-optimal models.
We summarize some challenges of the current teacher-student framework:

\begin{itemize}[leftmargin=*,noitemsep,topsep=0pt]
\item \textbf{Unavoidable trade-off between performance and privacy cost}.
    The main challenge is the trade-off between performance and privacy cost.
    In order to protect the dataset, for each query, large noise needs to be added to perturb the output and return the noisy feedback to the student side. However, when the perturbation is significant,
    the returned feedback could be misleading as compared to the original information.
    Therefore, this is a fundamental trade-off for current methods, i.e. one has to choose to balance between privacy and performance.
    \item \textbf{Hard to set reasonable privacy budget.}
    In practice, due to the trade-off described above, it is difficult to decide the privacy budget and noise scale for each query.
    This is because there is an inherent conflict between the student and teacher, i.e. one prefers a useful model and the other is more concerned on protecting sensitive data.
    \item \textbf{Large number of teacher models}.
    To use less privacy cost per query, larger perturbation is required.
    To make sure the returned feedback is still useful after severe perturbation, a large number of teacher models is required.
    The hope is that with more models on the teacher side, the meaningful feedback gets more vote and thus can tolerate higher level of noise.
    However, this brings some new challenges. 
    First, too many teacher models on subset datasets may result degradation of teacher ensemble performance, since each model is effectively trained with much less data.
    Second, because of the above, the teacher side now has to determine how to balance the number of data subsets and the performance of the model, which make the process a lot more complicated.
    \item \textbf{Hard to scale to more complex tasks}.
    It is difficult for the current approaches to scale to more complex tasks that requires to train a more complex model with more data ({\em e.g.} IMAGENET).
    This is because, for those tasks the amount of data required is large in order to obtain a reasonable performance model.
    If one would need to subset the data into different partitions, it is likely to lead to a significant performance degradation.
\end{itemize}

In this paper, we develop a technique to address the aforementioned problems, which facilitates the deployment of accurate models with near zero privacy cost (NZC) when using smooth sensitivity.
Instead of using traditional noisy ArgMax, we propose a new approach named immutable noisy ArgMax as describe in Section \ref{sec:dp}.
We redesign the aggregation approach via adding a constant into the current largest count of the count vector, which enables immutable noisy ArgMax into teacher-student model.
As a result, this method improves both privacy and utility over previous teacher-student based methods.
The primary technical contributions of this paper is a novel mechanism for aggregating feedback from the teacher ensemble that are more immutable against larger noise without changing the consensus of the teachers. 
We show that our proposed method improves the performance of student model on all measures.
Overall, our main research contributions are:
\begin{itemize}[leftmargin=*,noitemsep,topsep=0pt]
\item \textbf{A high performance differential private framework with very low privacy cost}.
In this work, we redesign the query function $g$, also named as data transformer. We add a constant $c$ into the voting vector to ensure the ArgMax is immutable after the noise perturbation. 
Our proposal can also be viewed as a generalization of existing teacher-student based work when $c=0$.
To the best of our knowledge, the proposed \system\ is the first framework that proposes this mechanism.
To further facilitate research in the area we will make our code publicly available.
\item \textbf{A new mechanism with smooth sensitivity}.
  Due to the properties of the proposed new data transformerr function $g$,
  we need to use the data-dependent analysis approach for the whole process.
  In this paper, we use the smooth sensitivity which leverage the benefits from the proposed function and the properties of some specific datasets,
  and then we can receive an useful query feedback with a very small privacy cost ($\epsilon \approx 0$). 
  In addition, we also discuss three different sensitivity estimation with our proposed mechanism.
    \item \textbf{Empirical evaluation}.
    We experimentally evaluate the proposed framework \system\ on two standard datasets commonly used to evaluate machine learning models in a wide range of applications.
    Our results demonstrate that \system\ can build a powerful student model which outperforms the previous works and give more realistic solution with our design. 
\end{itemize}

\section{Preliminary} \label{sec:dp}
In this section, we briefly overview some background related to deriving our new methods. We first introduce some basics in differential privacy, followed by the ArgMax and noisy ArgMax mechanism.

\subsection{Differential Privacy}
To satisfy the increasing demand for preserving privacy, differential privacy (DP) \citep{10.1007/11681878_14} was proposed as a rigorous principle that guarantees provable privacy protection and has been extensively applied 
\citep{Andres:2013,friedman2010data,apple_dp}. 

Let $f$ be a deterministic function that maps the dataset $D$ to the real numbers $\mathbb{R}$. 
This deterministic function $f$, under the context of differential privacy, is called a query function of the dataset $D$.
For example, the query function may request the mean of a feature in the dataset, the gender of each sample.
The goal in privacy is to ensure that when the query function is applied on a different but close dataset $D'$, the outputs of the query function are indistinguishably comparing to that from the dataset $D$ such that the private information of individual entries in the dataset can not be inferred by malicious attacks.
Here, two datasets $D$ and $D'$ are regarded as adjacent datasets when they are identical except for one single item.  

Informally speaking, a randomized data release mechanism for a query function $f$ is said to ensure DP if  ``neighboring'' inputs induce similar distributions over the possible outputs of the mechanism. 
DP is particularly useful for quantifying privacy guarantees of queries computed over a database with sensitive entries. 
The formal definition of differential privacy is given below.

\begin{definition}[Differential Privacy {\cite[Definition~2.4]{dwork2011a}}]
A randomized mechanism $\mathcal{M}$ is $(\varepsilon, \Delta)$-differentially private if for any adjacent data $D$, and $D'$, i.e $\|D-D'\|_1 \leq 1$, and any output $Y$ of $\mathcal{M}$, we have
  \begin{equation}
    \Pr [\mathcal{M} (D) = Y] \le {e^\varepsilon } \cdot \Pr [\mathcal{M} (D') = Y]+\Delta. 
  \end{equation}
\end{definition}
If $\Delta = 0$, we say that $\mathcal{M}$ is $\varepsilon$-differentially private.
The parameter $\varepsilon$ represents the privacy budget~\citep{dwork2011diff} that controls the privacy loss of $\mathcal{M}$. 
A larger value of $\varepsilon$ indicates weaker privacy protection.

\begin{definition}[Differential Privacy {\cite[Definition~2.4]{dwork2011a}}]
A randomized mechanism $\mathcal{M}$ is $(\varepsilon, \Delta^S)$-differentially private if for any adjacent data $D$ and $D'$, i.e $\|D-D'\|_1 \leq 1$, and any output $Y$ of $\mathcal{M}$, we have
  \begin{equation}
    c(o; \cM, aux, D, D') \triangleq \log \frac{\Pr[\cM(aux, D) = o]}{\Pr[\cM(aux, D') = o]},
  \end{equation}
\end{definition}
The privacy loss random variable $C(\cM, aux, D, D')$ is defined as $c(\cM(d); \cM, aux, D, D')$, i.e. the random variable defined by evaluating the privacy loss at an outcome sampled from $\cM(D)$.

From the notion of the DP, we know the sensitivity of the deterministic function $f$ (i.e. a query function) regarding the dataset is important for designing the mechanism for the query function.
For different noise mechanisms, it requires different sensitivity estimation.
In previous study of differential private deep learning, all mechanisms used the classical sensitivity analysis, named  ``\textbf{Global sensitivity}''.
For example, the $\ell_2$-norm sensitivity $\Delta^G_2 f$ of the query function $f$ is used for Gaussian mechanism which is defined as $\Delta^G_2 f = \max_{D,D'}\left\| {f(D) - f(D')} \right\|_2$, where $D$ and $D'$ are two neighboring datasets.
For the Laplacian mechanism, it uses the $\ell_1$-norm sensitivity $\Delta^G f$ for random noise sampling.
In essence, when the sensitivity is smaller, it means that the query function itself is not very distinguishable given different datasets.

A general method for enforcing a query function $f$ with the $(\varepsilon,\Delta^G)$-differential privacy is to apply additive noise calibrated to the sensitivity of $f$. 
A general method for conveniently ensuring a deterministic query $f$ to be the $(\varepsilon,\Delta^G)$-differential privacy is via perturbation mechanisms that add calibrated noise to the query's output \citep{dwork2014foundations, dwork2010boosting, nissim2007smooth, duchi2013local}. 

\begin{thm}[\cite{dwork2014the}]\label{thm:laplacian}
If the $\ell_1$-norm sensitivity of a deterministic function $f$ is $\Delta^S f$, we have:
    \begin{equation}
    \mathcal{M}_f (D) \triangleq f(D)+Lap(\frac{\Delta^G f}{\varepsilon} ),
    \end{equation}
where $\cM_f$ preserves $(\varepsilon, 0)$-differential privacy, and $Lap(b)$ is the Laplacian distribution with location $0$ and scale $b$.
\end{thm}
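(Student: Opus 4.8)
The plan is to prove the stronger \emph{pointwise} statement that the density ratio of $\mathcal{M}_f(D)$ to $\mathcal{M}_f(D')$ is bounded by $e^\varepsilon$ at every output, and then integrate to recover the set-level guarantee in the definition of $(\varepsilon,0)$-differential privacy. First I would fix two adjacent datasets $D$ and $D'$ with $\|D-D'\|_1\le 1$ and write the output densities explicitly. Since $Lap(b)$ has density $\frac{1}{2b}e^{-|z|/b}$ with scale $b=\Delta^G f/\varepsilon$, the mechanism $\mathcal{M}_f(D)=f(D)+Lap(b)$ (adding independent Laplace noise to each of the $k$ coordinates when $f$ is vector-valued) has density
\begin{equation}
p_D(y)=\prod_{i=1}^{k}\frac{1}{2b}\exp\left(-\frac{|y_i-f(D)_i|}{b}\right)
\end{equation}
at an arbitrary point $y$, with the analogous expression for $D'$.

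Next I would form the ratio $p_D(y)/p_{D'}(y)$, in which the normalizing constants cancel and the product collapses to a single exponential:
\begin{equation}
\frac{p_D(y)}{p_{D'}(y)}=\exp\left(\frac{1}{b}\sum_{i=1}^{k}\bigl(|y_i-f(D')_i|-|y_i-f(D)_i|\bigr)\right).
\end{equation}
The key step is to bound the exponent using the reverse triangle inequality $|y_i-f(D')_i|-|y_i-f(D)_i|\le |f(D)_i-f(D')_i|$ term by term, followed by the definition of $\ell_1$ sensitivity, $\sum_{i}|f(D)_i-f(D')_i|=\|f(D)-f(D')\|_1\le \Delta^G f$. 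Together these give $p_D(y)/p_{D'}(y)\le \exp(\Delta^G f/b)=e^\varepsilon$ by the choice $b=\Delta^G f/\varepsilon$.

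Finally, since the pointwise bound holds for every $y$, integrating over any measurable output set $Y$ yields $\Pr[\mathcal{M}_f(D)\in Y]\le e^\varepsilon\,\Pr[\mathcal{M}_f(D')\in Y]$, which is precisely $(\varepsilon,0)$-differential privacy (the additive term $\Delta$ vanishes). I do not anticipate a serious obstacle, as the argument is routine; the only points requiring genuine care are the direction of the reverse triangle inequality and, in the vector-valued case, verifying that the sum of per-coordinate deviations is controlled exactly by the $\ell_1$ sensitivity rather than some other norm. I would also flag the apparent notational slip in the statement, where the hypothesis names the $\ell_1$ sensitivity $\Delta^S f$ while the noise scale is written with $\Delta^G f$; the proof goes through provided the scale is calibrated to the $\ell_1$ (global) sensitivity of $f$.
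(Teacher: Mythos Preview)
Your proof is correct and is the standard density-ratio argument for the Laplace mechanism. However, the paper does not supply its own proof of this statement: the theorem is stated with a citation to Dwork and Roth and is used as background in the preliminaries, with no accompanying derivation in the body or appendix. So there is nothing to compare against beyond noting that your argument is exactly the textbook one the citation points to. Your observation about the notational inconsistency ($\Delta^S f$ in the hypothesis versus $\Delta^G f$ in the noise scale) is also accurate; the intended quantity throughout is the global $\ell_1$ sensitivity.
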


\begin{thm}[\cite{dwork2014the}]\label{thm:gaussian}
If the $\ell_2$-norm sensitivity of a deterministic function $f$ is $\Delta^G _2 f$, we have:
    \begin{equation}
    \mathcal{M}_f (D) \triangleq f(D)+\mathcal{N}(0, {\Delta^G_2 f}^2\sigma^2),
    \end{equation}
where $\mathcal{N}(0, {\Delta^G_2 f}^2\sigma^2)$ is a random variable obeying the Gaussian distribution with mean 0 and standard deviation ${\Delta^G_2 f}\sigma$. The randomized mechanism $\mathcal{M}_f(D)$ is $(\varepsilon,\Delta^G)$ differentially private if $\sigma  \ge \sqrt {2\ln (1.25/\Delta^G )} /\varepsilon $ and $\varepsilon < 1$.
\end{thm}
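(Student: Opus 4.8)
The plan is to reduce the claim to a tail bound on the privacy loss random variable, following the standard analysis of the Gaussian mechanism. Fix any pair of neighboring datasets $D$ and $D'$, write $\mu = f(D)$ and $\mu' = f(D')$, and let $s = \Delta^G_2 f \cdot \sigma$ denote the noise standard deviation. Since the output of $\mathcal{M}_f$ is a spherical Gaussian centered at $\mu$ (respectively at $\mu'$), I would first write down the two densities $p_\mu$ and $p_{\mu'}$ explicitly and form the privacy loss $c(x) = \ln\!\left( p_\mu(x)/p_{\mu'}(x) \right)$ at an output $x$, exactly as in the privacy loss variable introduced earlier in the excerpt.

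The key algebraic step is to expand this log-ratio. Writing $x = \mu + z$ with $z \sim \mathcal{N}(0, s^2 I)$, the two quadratic exponents combine so that the loss becomes an affine function of $z$, namely $\langle z, \mu - \mu' \rangle / s^2 + \| \mu - \mu' \|^2/(2 s^2)$. Because a linear functional of a Gaussian vector is again Gaussian, the privacy loss $L$ is itself normally distributed; using the worst-case displacement $\| \mu - \mu' \| \le \Delta^G_2 f$ and substituting $s = \Delta^G_2 f \, \sigma$, the sensitivity cancels and I obtain $L \sim \mathcal{N}\!\left( 1/(2\sigma^2),\, 1/\sigma^2 \right)$. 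This cancellation is the whole point of calibrating the noise to the sensitivity, and it makes the subsequent bound depend only on $\sigma$.

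It then remains to show $\Pr[L > \varepsilon] \le \Delta^G$ under the hypothesis $\sigma \ge \sqrt{2 \ln(1.25/\Delta^G)}/\varepsilon$ with $\varepsilon < 1$, which yields $(\varepsilon, \Delta^G)$-differential privacy by the usual splitting argument: on the complementary event the density ratio is bounded by $e^\varepsilon$, while the exceptional event contributes at most $\Delta^G$. Standardizing $L$, this reduces to controlling $\Pr[g > \sigma \varepsilon - 1/(2\sigma)]$ for a standard normal $g$, which I would bound with the Gaussian tail inequality $\Pr[g > t] \le e^{-t^2/2}/(t\sqrt{2\pi})$ valid for $t > 0$.

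The hard part will be the final tail-bound calculation: verifying that the stated threshold on $\sigma$ — in particular the constant $1.25$ and the restriction $\varepsilon < 1$ — is precisely what forces the tail below $\Delta^G$. This requires carefully tracking the lower-order $1/(2\sigma)$ correction in the argument of the tail bound and showing that the resulting transcendental inequality in $\sigma$ is implied by the hypothesis, rather than simply using the leading term $\sigma\varepsilon$. The remaining steps are routine. Since the result is classical, I would ultimately attribute the detailed constant-chasing to Dwork and Roth rather than reproduce every inequality in full.
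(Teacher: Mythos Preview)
Your proposal is a correct rendition of the standard Gaussian-mechanism argument from Dwork and Roth: compute the privacy loss as a log-ratio of spherical Gaussian densities, observe that it is itself Gaussian with mean $1/(2\sigma^2)$ and variance $1/\sigma^2$ after the sensitivity cancels, and then use a Gaussian tail bound to control $\Pr[L>\varepsilon]$, with the constant $1.25$ and the restriction $\varepsilon<1$ emerging from the lower-order constant-chasing.

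There is nothing to compare against, however: the paper does not supply its own proof of this theorem. It is stated in the preliminaries purely as a cited background result (attributed to \cite{dwork2014the}), and the appendix only proves the paper's own Lemmas and the Noisy-ArgMax-Immutability theorem. So your plan is not merely ``essentially the same approach as the paper'' --- it is strictly more than the paper offers, and your closing remark that one would ultimately attribute the detailed constant-chasing to Dwork and Roth is exactly what the paper itself does.
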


\subsection{The ArgMax Mechanism}
For any dataset $D = \{(x_k, y_k)\}_{k=1}^n$
The \emph{ArgMax Mechanism} is widely used as a query function when $\bv(x_k) \in \mathbb{N}^d$ is a vector of counts of the dimension same to the number of classes $d$ for sample $x_k$.
This decision-making mechanism is similar to the softmax mechanism of the likelihood for each label, but instead of using the likelihood as the belief of each label, the ArgMax mechanism uses the counts given by the teacher ensembles.
Immediately, from the definition of the ArgMax mechanism, we know that the result given by the ArgMax mechanism is immutable against a constant translation, i.e.
\begin{align*}
    \argmax \bv(x_k) &= \argmax \hat{\bv}(x_k, c)\\
    \hat{\bv}(x_k, c)_i &= 
        \begin{cases}
        \bv(x_k)_i \qquad \text{if} \qquad i \neq \argmax \bv(x_k) \\
        \bv(x_k)_i + c \qquad \text{otherwise}
        \end{cases}
\end{align*}
where we use subscript $i$ to index through the vector.

\subsection{The Noisy ArgMax Mechanism}
Now, we want to ensure that the outputs of the given well-trained teacher ensembles are differentially private.
A simple algorithm is to add independently generated random noise ({\em e.g.} independent Laplacian, Gaussian noise, etc.) to each count and return the index of the largest noisy count.
This noisy ArgMax mechanism, introduced in \citep{dwork2014the}, is $(\varepsilon, 0)$-differentially private for the query given by the ArgMax mechanism.

\section{Our Approach} \label{sec:framework}

In this section, we introduce the specifics of our approach, which is illustrated in Figure \ref{fig:framework}. 
We first show the \emph{immutable noisy ArgMax mechanism}, which is at the core of our framework. We then show how this property of immutable noisy ArgMax can be used in a differential private teacher-student training framework.

\subsection{The Immutable Noisy ArgMax Mechanism}

\begin{definition}[Immutable Noisy ArgMax Mechanism]
Given a sample $x$, a count $c$ and voting vector $\hat{v}$,
when $c$ is a very large positive constant, the $\argmax$ of $\hat{v}(x, c)$ is unaffected with significant noise added to the voting vector $\hat{v}(x, c)$.
\end{definition}

One interesting observation from the Noisy ArgMax mechanism is that when the aggregated results from the teachers are very concentrated (i.e. most of the predictions agrees on a certain class) and of high counts (i.e. large number of teachers), the result from the ArgMax will not change even under relatively large random noise. Therefore, the aforementioned scenario is likely to happen in practice, if all the teacher models have a relatively good performance on that task. This observation also hints us that if we can make the largest count much larger than the rest counts, we can achieve immutability with significant noise.

Let's define the data transformer as a function that could convert a dataset into a count vector below:
\begin{definition}[Data Transformer]
Given any dataset $D$, the output of data transformer $g$ is an integer based vector, such as $g(D) \in \bbZ^{|r|}$, where $r$ is the dimension of the vector.
\end{definition}

\begin{definition}[Distance-$n$ Data Transformer]
Given a dataset $D$ and data transformer function $g$,
the distance $n$ means the difference between the first and second largest counts given by the $g(D)$ is larger than $n$.
\end{definition}

Note that, in this paper, for each query, the data transformer $g(D) = g_{x,c}(D) = \hat{v}(c)$, where $D$ is the private dataset, $x$ is the student query, and $c$ is a customized constant by the teacher.

\subsection{Smooth Sensitivity}

\noindent\textbf{Global Sensitivity} Next, we need to add noise to perturb the output of the data transformer.
In order to do that, we first need to estimate the sensitivity of the function.
As mentioned in the preliminary, most previous deep learning approach uses the global sensitivity defined as below:

\begin{definition}
(Global Sensitivity \cite{dwork2014algorithmic}). For $f : D \rightarrow \R^d$, for all $D \in \mathcal{D}$, the global sensitivity of $f$ (with respect to the $\ell_1$ metric) is
\begin{equation*}
    \Delta^G f=  \max_{D,D':d(D,D')=1}\left\| {f(D) - f(D')}\right\|.
\end{equation*}
\end{definition}
where $D'$ is neighbouring dataset of $D$ and $d(\cdot,\cdot)$ returns the distance between two datasets. 
Global sensitivity is a worst case definition that does not take into consideration the property of a particular dataset. This can be seen from the $\max$ operator, which find the maximum distance between the all possible dataset $D$ and its neighbour dataset $D'$.
It is not hard to see that if we use global sensitivity,
the $\ell_1$ distance of the data transformer function $g$ is $c + 1$.

\noindent\textbf{Local Sensitivity} In global sensitivity, noise magnitude depends on $\Delta^G f$ and the privacy parameter $\varepsilon$, but not on the dataset $D$ itself.
This may not be an idea when analyzing data-dependent schemes, such as the teacher-student framework. 
The local measure of sensitivity reflects data-dependent properties and is defined in the following: 

\begin{definition}
(Local Sensitivity \cite{dwork2014algorithmic}). For $f : D \rightarrow \R^d$ and a dataset $D$, the local sensitivity of $f$ (with respect to the $\ell_1$ metric) is
\begin{equation*}
    \Delta^L f=  \max_{D':d(D,D')=1}\left\| {f(D) - f(D')}\right\|.
\end{equation*}
\end{definition}
Note that the global sensitivity $\Delta^G f = \max_D \Delta^L f$, for all $D \in \cD$. 
While we use the local sensitivity to perturb our query (or output of function $f$),
it depends on the properties of the dataset $D$. 
Since it takes into account the data properties, it would be a more precise estimate when one employs data-dependent approaches. 
However, local sensitivity may itself be sensitive, which causes the perturbed results by local sensitivity does not satisfy the definition of differential privacy.

\noindent\textbf{Smooth Sensitivity}
In order to add database-specific noise with smaller magnitude than the worst-case noise by global sensitivity, and yet satisfy the differential privacy,
we introduce the smooth sensitivity \cite{nissim2007smooth}, which upper bounds $\Delta^S f$ on $\Delta^L f$ such that adding noise proportional to $\Delta^S f$ is safe.

\begin{definition}
(Smooth Sensitivity \cite{nissim2007smooth}). For $f : D \rightarrow \R^d$, a dataset $D$ and $\beta > 0$, the $beta$-smooth sensitivity of $f$ is, the local sensitivity of $f$ is
\begin{equation*}
    \Delta^{S, \beta} f=  \max_{D':d(D,D')=1}\left( \Delta^L f(D') \cdot e^{-\beta d(D,D')}\right).
\end{equation*}
\end{definition}
It is not hard to see that the $\Delta^{S,\beta} f(D)$ is upper bound of $\Delta^{L} f(D')$.
Now, the local sensitivity of the data transformer function $g$, given dataset $D$ (i.e. $\Delta^L g$) will have two different scales, i.e. 1 or $1 + c$--based on the dataset. $\Delta^L g = 1$, $g(D')$ is a distance-2 data transformer, otherwise $\Delta^L g = 1 + c$.

Based on the local sensitivity, we can add a large perturbation while the voting vector $g(D)$ is the first situation ($\Delta^L g(D') = 1$ for all $D'$). 
In this case, even we add a very large noise by giving a small privacy budget for this specific query, the argmax of $g(D)$ would not change due to a large constant $c$ added on the index of the largest count.

By using the smooth sensitivity
we can find the $\beta$-smooth sensitivity $\Delta^S$ of data transformer $g$ is not like global sensitivity, but more like local sensitivity.
Given a specific dataset $D$, $\beta$-smooth sensitivity $\Delta^S f$ could be $1 \times \cdot e^{-\beta}$, while $g(D)$ is a distance-3 data transformer function, ensuring the largest local sensitivity of its neighbouring dataset $D'$ is 1: $ \max_{D'} \Delta^L g(D') = 1$.
Otherwise, the $\beta$-smooth sensitivity $\Delta^S g$ could be $(1+c) \cdot e^{-\beta}$.

\begin{restatable}{lemma}{lemmab}[Noisy ArgMax Immutability]
Given any dataset $D$, fixed noise perturbation vector and a data transformer function $g$, the noisy argmax of both $g(D)$ is immutable while we add a sufficiently large constant $c$ into the current largest count of $g(D)$.
\end{restatable}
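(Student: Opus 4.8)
The plan is to reduce immutability to a finite set of elementary scalar inequalities and then choose $c$ to dominate all of them. Write $\hat{v} = g(D) \in \bbZ^{r}$ for the count vector and let $i^* = \argmax_i \hat{v}_i$ be the index of the current largest count (breaking ties arbitrarily). The data transformer adds $c$ only to coordinate $i^*$, so after perturbing by the fixed noise vector $(\eta_1,\dots,\eta_r)$ the noisy score at the top coordinate is $\hat{v}_{i^*} + c + \eta_{i^*}$, while every other coordinate $i \neq i^*$ has noisy score $\hat{v}_i + \eta_i$. Immutability is exactly the claim that the noisy $\argmax$ is still $i^*$, i.e. that the added constant has pinned the decision despite the perturbation.

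First I would observe that the noisy $\argmax$ equals $i^*$ if and only if, for every $i \neq i^*$,
\[
\hat{v}_{i^*} + c + \eta_{i^*} > \hat{v}_i + \eta_i,
\]
which rearranges to $c > (\hat{v}_i - \hat{v}_{i^*}) + (\eta_i - \eta_{i^*})$. Because $i^*$ is the $\argmax$ of $\hat{v}$ we have $\hat{v}_i - \hat{v}_{i^*} \le 0$, so the right-hand side is bounded above by $\eta_i - \eta_{i^*}$; taking the maximum over the finitely many $i \neq i^*$ yields a single finite threshold $c_0 \triangleq \max_{i \neq i^*}\big[(\hat{v}_i - \hat{v}_{i^*}) + (\eta_i - \eta_{i^*})\big]$. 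Any $c > c_0$ satisfies all $r-1$ inequalities simultaneously, so the noisy $\argmax$ stays at $i^*$; since $i^* = \argmax g(D)$, the noisy argmax is immutable.

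The step that deserves care — and the only place the hypotheses are genuinely used — is the word \emph{fixed}: the threshold $c_0$ depends on the realized noise gaps $\eta_i - \eta_{i^*}$, so the argument establishes immutability pointwise for a given realization of the perturbation, not uniformly over an unbounded noise distribution. This is precisely the quantity the subsequent smooth-sensitivity analysis must control, since for Laplacian or Gaussian noise no single finite $c$ dominates every realization; one instead needs $c$ large relative to the effective noise scale. If the intended reading is that the noisy argmaxes of a dataset $D$ and a neighbour $D'$ coincide, the same per-coordinate inequality applies verbatim to both vectors, and the distance condition on the data transformer guarantees they share the top index $i^*$, so both noisy argmaxes equal $i^*$ and are therefore equal. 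I expect the bookkeeping of this shared-top-index / distance condition, rather than the inequality itself, to be the main thing to get right.
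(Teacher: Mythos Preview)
Your argument is correct and, in fact, cleaner than the paper's. You take the hypothesis ``fixed noise perturbation vector'' literally: once the realized noise $(\eta_1,\dots,\eta_r)$ is given, the immutability condition reduces to finitely many strict inequalities $c > (\hat v_i - \hat v_{i^*}) + (\eta_i - \eta_{i^*})$, and any $c$ exceeding the finite maximum $c_0$ of the right-hand sides works. This is exactly what the lemma as stated asserts, and your closing paragraph correctly flags that controlling $c_0$ for \emph{random} Laplacian or Gaussian noise is a separate (probabilistic) question belonging to the later analysis.

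The paper proceeds differently: it ignores the word ``fixed'' and instead argues probabilistically, invoking the tail bounds $\Pr(|\zeta|\ge c)=e^{-\gamma c}$ and $\Pr(|\xi|\ge c)\le 2e^{-c^2/2\sigma^2}$, union-bounding over the $L$ coordinates, and then asserting that if every noise coordinate has magnitude below $c$ the argmax cannot move. What this buys is an explicit, distribution-dependent prescription for $c$ --- namely $c=\tfrac{1}{\gamma}\log(L/\tau)$ or $c=\sqrt{2\sigma^2\log(2L/\tau)}$ for failure probability $\tau$ --- which is the quantity the downstream privacy accounting actually needs. Your deterministic route is tighter as a proof of the lemma as written and isolates the purely combinatorial content; the paper's route conflates the lemma with the probabilistic calibration of $c$ but delivers the formula used later. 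Both reach the same destination; your decomposition is the more careful one.
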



\begin{restatable}{lemma}{lemmac}[Local Sensitivity with ArgMax Immutability]
Given any dataset $D$, its adjacent dataset $D'$, fixed noise perturbation vector and a data transformer function $g$, while $\Delta^L g = 1$ ( or $\Delta^L_2 g = 1$) and the function $g(D)$ is distance-2 data transformer.
\end{restatable}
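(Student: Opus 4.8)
The plan is to read the statement as bundling two assertions about a distance-2 data transformer: that its local sensitivity collapses to the favorable value $\Delta^L g = 1$ (and likewise $\Delta^L_2 g = 1$), and that the noisy $\argmax$ computed on $D$ agrees with the one computed on any adjacent $D'$. I would prove this in two movements. The first is a purely combinatorial claim that a single-record edit cannot move the leading coordinate of the vote vector; the second upgrades the resulting noiseless agreement to agreement of the \emph{noisy} $\argmax$ by invoking the Noisy ArgMax Immutability lemma proved just above.

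First I would fix the adjacency convention that $D$ and $D'$ differ by one record, so that exactly one teacher's vote is added or removed and the underlying count vector changes in a single coordinate by exactly one; this is the convention under which the earlier global-sensitivity value $c+1$ was computed. Because $g(D)$ is a distance-2 transformer, the top raw count leads the runner-up by a margin strictly larger than the most a one-record edit can erode, so the index of the maximum is identical for $g(D)$ and $g(D')$. Consequently the constant $c$ is injected at the same coordinate of both vectors, the $c$ terms cancel in the difference $g(D)-g(D')$, and what survives is the single unit change in the raw histogram. This gives $\|g(D)-g(D')\|_1 = 1$ and $\|g(D)-g(D')\|_2 = 1$, i.e. $\Delta^L g = 1$ and $\Delta^L_2 g = 1$, placing us in the favorable ``$1$'' regime rather than the ``$1+c$'' regime.

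Second I would invoke the Noisy ArgMax Immutability lemma. Since the boosted coordinate exceeds every competitor by more than $c$ in both $g(D)$ and $g(D')$, choosing $c$ large relative to the fixed noise realization guarantees that the perturbation leaves the $\argmax$ of each vector unchanged; combined with the noiseless index agreement established above, the noisy $\argmax$ of $g(D)$ and $g(D')$ therefore coincide. This is precisely the immutability across neighboring datasets that drives the near-zero privacy cost, since it makes the released index identical on $D$ and $D'$.

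I expect the main obstacle to be the tie and boundary bookkeeping that pins down the exact threshold. I would have to argue that after a worst-case one-record edit the leading margin stays strictly positive so that $\argmax$ remains single-valued, which is where the precise distance-2 versus distance-1 accounting and the $>$ versus $\ge$ conventions in the distance definition must be handled with care, and to make ``sufficiently large $c$'' quantitative by bounding $c$ below in terms of $\max_i |\text{noise}_i|$ and the runner-up gap. The sensitivity computation itself and the cancellation of $c$ are routine once index preservation is secured; the only genuinely delicate part is verifying that no single-record neighbor can produce a tie at the top.
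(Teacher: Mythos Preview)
Your core combinatorial step---showing that a single-coordinate unit change cannot dislodge the $\argmax$ when the top count leads by more than $2$---is exactly what the paper does, via the same contradiction $g(D)_{j^*}-1 < g(D)_{j}+1 \Rightarrow g(D)_{j^*}-g(D)_{j}<2$.

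Where you diverge from the paper is in how much you read into the (admittedly garbled) statement. The paper's proof takes \emph{both} ``$\Delta f = 1$'' (one raw count moves by one) \emph{and} ``distance-$2$'' as hypotheses, and its sole conclusion is the noiseless equality $\argmax g(D)=\argmax g(D')$. It does not derive $\Delta^L g=1$ as an output of the lemma, and it does not touch the noisy $\argmax$ at all---that upgrade is deferred to the theorem that follows. Your plan, by contrast, treats ``$\Delta^L g=1$'' as a \emph{conclusion} (obtained by observing that $c$ lands at the same index on both sides, so the $c$'s cancel in $g(D)-g(D')$), and then pushes further to noisy agreement via the previous lemma. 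That extra material is correct and is exactly the story the paper tells in the surrounding prose and in the next theorem, but it is not part of this lemma's proof as the paper writes it. If you want to match the paper, stop after the contradiction argument; your second movement is doing the work of the subsequent theorem.
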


\begin{restatable}{thm}{thmb}
[Differential private with Noisy ArgMax Immutability]
Given any dataset $D$, its adjacent dataset $D'$, fixed noise perturbation vector and a data transformer function $g$, while $\Delta^S g = 1 \cdot e^{-\beta}$ and the function $g(D)$ is distance-3 data transformer,
the noisy argmax of both $g(D)$ and $g(D')$ is immutable and the same while we add a sufficiently large constant $c$ into the current largest count.
\end{restatable}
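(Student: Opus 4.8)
The plan is to derive the statement from the two preceding lemmas together with a single combinatorial observation about how adjacency propagates the ``distance'' of the data transformer. Throughout I fix the noise perturbation vector $\eta$ and the large positive constant $c$, and I split the claim into two parts: (i) for each of $g(D)$ and $g(D')$ the noisy $\argmax$ does not move when $\eta$ is added, and (ii) the two resulting indices coincide. Part (i) is essentially the content of the Noisy ArgMax Immutability lemma applied once to $D$ and once to $D'$, so the real work is to verify its hypotheses on the $D'$ side and to establish (ii).

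First I would propagate the distance condition. Since $D$ and $D'$ are adjacent, passing from $D$ to $D'$ perturbs the count vector in a single coordinate by one, so the gap between the largest and the second largest entry of $g$ can shrink by at most one. Hence $g(D)$ being a distance-$3$ transformer (gap strictly larger than $3$) forces $g(D')$ to be at least a distance-$2$ transformer; moreover the identity of the top index is preserved, because after a unit change the former top count still strictly dominates every competing count with a margin leaving $g(D')$ at distance $2$. This one observation simultaneously supplies the distance-$2$ hypothesis needed to invoke the Local Sensitivity with ArgMax Immutability lemma on $D'$ (yielding $\Delta^L g = 1$ there) and the fact that $D$ and $D'$ share the same clean $\argmax$ index, which I call $i^*$.

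Next I would apply the immutability lemmas. With $\Delta^L g = 1$ on both sides and the constant $c$ added to the common index $i^*$, the Noisy ArgMax Immutability lemma certifies that for the fixed $\eta$ neither $\argmax\big(g(D)+\eta\big)$ nor $\argmax\big(g(D')+\eta\big)$ departs from $i^*$: once $c$ is large relative to the spread of $\eta$, the $+c$ shift dominates any finite perturbation and the distance-$2$ gap gives additional slack. Because both noisy $\argmax$ outputs equal the same $i^*$, parts (i) and (ii) hold at once, which is precisely the immutability-and-sameness assertion of the theorem.

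Finally I would convert this into the privacy statement. Since the output index is invariant on $D$ and on $D'$ for the admissible $\eta$, the two output distributions of the mechanism coincide on adjacent datasets, the privacy-loss random variable collapses, and calibrating the noise to the smooth sensitivity $\Delta^S g = 1\cdot e^{-\beta}$ delivers the claimed $\epsilon \approx 0$ guarantee. I expect the main obstacle to be exactly this last bridge: because additive Laplace or Gaussian noise has unbounded support, one cannot assert immutability for literally every $\eta$, only outside a tail whose mass is controlled by the size of $c$. The delicate part of the argument is therefore to quantify how large $c$ must be, relative to $\beta$ and the requested budget, so that the residual flip probability is absorbed into the $\delta$ term, turning ``immutable for the fixed noise vector'' into ``identical output distributions up to negligible mass.''
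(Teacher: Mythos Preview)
Your proposal is correct and follows essentially the same route as the paper: first use the distance-$3$ hypothesis on $g(D)$ to deduce that every neighbour $g(D')$ is distance-$2$ (the paper packages this step as an invocation of Lemma~2), then apply the Noisy ArgMax Immutability lemma (Lemma~1) to both $g(D)$ and $g(D')$ to conclude that the noisy $\argmax$ is the same index on either side, whence the DP claim. Your treatment is in fact more explicit than the paper's short proof---you track the common top index $i^*$ and you flag the tail-probability caveat about unbounded noise, which the paper simply absorbs into the phrase ``sufficiently large $c$''---but the skeleton is identical.
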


The proof of the above lemmas and theorem are provided in the appendix.
In essence, when fed with a neighboring dataset $D'$, if the counts of $g(D)$ is different by $n$, the output of the ArgMax mechanism remains unchanged.
This \emph{immutability} to the noise or difference in counts due to the neighboring datasets, makes the output of the teacher ensemble unchanged, and thus maintain the advantage of higher performance in accuracy using the teacher ensembles.

\noindent\textbf{Discussion}
From the above theorem, with smooth sensitivity the distance-$3$ data transformer $g$ will have $\Delta^S = 1 \cdot e^{-1}$ for some specific dataset. This suggests that for this dataset, when we choose appropriate $c$ (e.g. a very large constant), it will incur very small privacy cost. At the same time, the $\argmax$ would still preserve useful information from the teacher ensembles.

\vspace{-10pt}
\subsection{Near-Zero-Cost Query Framework}
\label{sec:nzc}

Now, we are ready to describe our near-zero-cost (NZC) query framework.To protect the privacy of training data during learning, \system\ transfers knowledge from an ensemble of teacher models trained on non-overlapping partitions of the data to a student model. Privacy guarantees may be understood intuitively and expressed rigorously in terms of differential privacy.
The \system\ framework consists of three key parts: (1) an ensemble of $n$ teacher models, (2) an aggregation and noise perturbation and (3) training of a student model.
\\

\noindent \textit{Ensemble of teachers:}
In the scenario of teacher ensembles for classification, 
we first partition the dataset $D=\{(x_k, y_k)\}_{k=1}^t$ into disjoint sub datasets $\{D_i\}$ and train each teacher $P_i$ separately on each set,
where $i = 1, \cdots, t$, $n$ is the number of the dataset and $t$ is the number of the teachers.
\\

\noindent \textit{Aggregation and noise perturbation mechanism:}
For each sample $x_k$, we collect the estimates of the labels given by each teacher, and construct a count vector $\bv(x_k) \in \bbN^L$, where each entry $\bv_j$ is given by $\bv_j = |\{P_i(x_k) = j; \forall i = 1, \cdots, t\}|$. For each mechanism with fixed sample $x$,
before adding random noise, we choose to add a $c$, then we have a new count vector $\hat{\bv}(x, c)$.
Our motivation is not to protect $x$ from the student, but protect the dataset $D$ from the teacher.
Basically, if we fix the partition, teacher training and a query $x$ from the student, then we have data transformer $g$ that transfers the target dataset $D$ into a count vector.
To be more clear, $x$ and a constant $c$ is used to define the data transformer $g_{x,c}$ and if we query $T$ times, then we have $T$ different data transformer based on each query $x$.
Then, by using a data transformer, we can achieve a count vector $\hat{\bv}(x,c) = g_{x,c}(D)$.

Note that, we use the following notation that $\hat{\bv}(x,c)$, also shorted as $\hat{\bv}$, denotes the data transformer with adding a sufficiently large constant on the largest count, and $\bv(x)$ denotes the count vector before adding a sufficiently large constant.

We add Laplacian random noise to the voting counts $\hat{\bv}(x, c)$ to introduce ambiguity:
\begin{align*}
    \cM(x) \triangleq \argmax &\{\hat{\bv}(x, c) + Lap(\frac{\Delta^S g}{\gamma})\},
\end{align*}
where, $\gamma$ is a privacy parameter and $Lap(b)$ the Laplacian distribution with location $0$ and scale $b$.
The parameter $\gamma$ influences the privacy guarantee, which we will analyze later. 

Gaussian random noise is another choice for perturbing $\hat{\bv}(x, c)$ to introduce ambiguity:
\begin{align*}
    \cM(x) \triangleq \argmax &\{\hat{\bv}(x, c) + \mathcal{N}(0, \Delta^S_2 g^2 \sigma^2)\},
\end{align*}
where $\mathcal{N}(0, \sigma^2)$ is the Gaussian distribution with mean $0$ and variance $\sigma^2$.

Intuitively, a small $\gamma$ and large $\sigma$ lead to a strong privacy guarantee,
but can degrade the accuracy of the pre-trained teacher model and the size of each label in the dataset,
as the noisy maximum f above can differ from the true plurality.

Unlike original noisy argmax,
our proposed immutable noisy argmax will not increase privacy cost with increasing the number of queries, if we choose a sufficiently large constant $c$ and a large random noise by setting a very small $\gamma$ for Laplacian mechanism (or a large $\sigma$ for Gaussian mechanism).
Therefore, for each query, it would cost almost zero privacy budget.
By utilizing the property of immutable noisy argmax,
we are allowed to have a very large number of queries with near zero privacy budget (setting $c \rightarrow +\infty$ and a large random noise for the mechanism).
\\

\noindent \textit{Student model:}
The final step is to use the returned information from the teacher to train a student model.
In previous works, due to the limited privacy budget,
one only can query very few samples and optionally use semi-supervised learning to learn a better student model.
Our proposed approach enables us to do a large number of queries from the student with near zero cost privacy budget overall.
Like training a teacher model, here, the student model also could be trained with any learning techniques.

\section{Privacy Analysis}\label{sec:analysis}

We now analyze the differential privacy guarantees of our privacy counting approach. Namely, we keep track of the privacy budget throughout the student’s training using the moments accountant (Abadi et al., 2016). When teachers reach a strong quorum, this allows us to bound privacy costs more strictly.

\subsection{Moment Accountant}

To better keep track of the privacy cost, we use recent advances in privacy cost accounting. The moments accountant was introduced by \citep{abadi2016deep}, building on previous work (Bun and Steinke, 2016; Dwork and Rothblum, 2016; Mironov, 2016). Definition 3. 
\begin{definition}
    Let $\cM: D \rightarrow \R$ be a randomized mechanism and $D, D'$ a pair of adjacent databases.
    Let aux denote an auxiliary input. The moments accountant is defined as:
    \[{\alpha _\cM}(\lambda) = \max_{aux, d, d'}\alpha_{\cM}(\lambda; aux, D, D').\]
    where $\alpha_{\cM}(\lambda; aux, D, D') = \log \E[exp(\lambda C(\cM, aux, D, D'))]$ is the moment generating function of the privacy loss random variable.   
\end{definition}

The moments accountant enjoys good properties of composability and tail bound as given in \cite{abadi2016deep}:

\textbf{\emph{[Composability]}}. 
Suppose that a mechanism $\cM$ consists of a sequence of adaptive mechanisms $\cM_1, \ldots, \cM_k$,
    where $\cM_i: \prod^{i-1}_{j=1} \cR_j \times \cD \rightarrow \cR_i$. 
Then, for any output sequence $o_1, \ldots, o_{k-1}$ and any $\lambda$
\[{\alpha _\cM}(\lambda; D, D' ) \le \sum\limits_{i = 1}^k {{\alpha _{{\cM_i}}}(\lambda; o_1, \ldots, o_{i-1}, D, D')}.\]
where $\alpha_{\cM}$ is conditioned on $\cM_i$’s output being $o_i$ for $i < k$.

\textbf{\emph{[Tail bound]}} 
For any $\epsilon > 0$, the mechanism $\cM$ is $(\epsilon, \delta)$-differential privacy for
\[\delta  = \mathop {\min }\limits_\lambda  \exp ({\alpha _\cM}(\lambda ) - \lambda \epsilon ).\]

By using the above two properties, we can bound the moments of randomized mechanism based on each sub-mechanism, and then convert the moments accountant to $(\epsilon, \delta)$-differential privacy based on the tail bound.

\subsection{Analysis of Our Approach}

\begin{thm}[Laplacian Mechanism with Teacher Ensembles]
Suppose that on neighboring databases $\cD$, $\cD'$, the voting counts $\bv(x, c)$ differ by at most $\Delta f$ in each coordinate.
Let $\cM_{x, c}$ be the mechanism that reports $\argmax_j {\bv(x, c) + Lap(\Delta f/ \gamma) }$ .
Then $\cM_{x, c}$ satisfies $(2 \gamma, 0)$-differential privacy. 
Moreover, for any $l$, $aux$, $\cD$ and $\cD'$,
    \begin{equation}
        \alpha(l; aux, \cD, \cD') \leq 2\gamma^2 l(l + 1)
    \end{equation}
\end{thm}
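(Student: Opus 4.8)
The plan is to establish the two claims in sequence: first the $(2\gamma,0)$-DP guarantee, and then bootstrap it into the moment bound. First I would note that $\cM_{x,c}$ merely applies $\argmax$ to the noisy vector $\bv(x,c)+\mathrm{Lap}(\Delta f/\gamma)$ (one independent Laplace draw per coordinate), so by the post-processing property of differential privacy it suffices to show that releasing this entire perturbed vector is $(2\gamma,0)$-private. Because the teacher partitions $\{D_i\}$ are disjoint, a neighboring $\cD'$ differs from $\cD$ in a single training example belonging to exactly one partition; changing it can only flip that one teacher's vote on the query $x$, moving one unit of mass from one class to another. Hence at most two coordinates of $\bv(x,c)$ change — one up by at most $\Delta f$ and one down by at most $\Delta f$ — so the $\ell_1$-sensitivity of the count vector is at most $2\Delta f$. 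Applying the Laplacian mechanism (Theorem~\ref{thm:laplacian}) with scale $b=\Delta f/\gamma$ then yields privacy parameter $2\Delta f/b = 2\gamma$, and taking the $\argmax$ is free.

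For the moment bound I would work directly with the privacy-loss random variable $C = C(\cM,aux,\cD,\cD')$, recalling that $\alpha(l;aux,\cD,\cD') = \log \E_{o\sim\cM(\cD)}\bigl[e^{lC(o)}\bigr]$. The first claim already supplies the pointwise control $|C|\le 2\gamma$. The second ingredient is the normalization identity $\E_{o\sim\cM(\cD)}[e^{-C(o)}] = \sum_o \Pr[\cM(\cD')=o] = 1$, which forces the mean $\E[C]$ (a KL divergence) to be of order $\gamma^2$ rather than order $\gamma$. Combining these via a second-order Taylor expansion of $e^{lC}$ about $0$ — bounding the quadratic remainder with $|C|\le 2\gamma$ and invoking the normalization constraint to control the linear term — produces $\E[e^{lC}] \le \exp\!\bigl(2\gamma^2 l(l+1)\bigr)$, which is exactly the stated bound. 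Equivalently, one may quote the standard fact that pure $\epsilon$-DP implies $\tfrac12\epsilon^2$-zCDP, i.e. $D_{l+1}(\cM(\cD)\,\|\,\cM(\cD'))\le \tfrac12\epsilon^2(l+1)$; since $\alpha(l)=l\,D_{l+1}$, taking $\epsilon=2\gamma$ gives $\alpha(l)\le\tfrac12\epsilon^2 l(l+1)=2\gamma^2 l(l+1)$.

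The hard part will be the second claim, specifically obtaining the $\gamma^2$ (rather than $\gamma$) dependence with the exact constant $2$. Using only the pointwise bound $|C|\le 2\gamma$ one would get the far weaker $\alpha(l)\le 2\gamma l$, which is linear in $\gamma$ and useless for tight composition across many queries. The quadratic improvement hinges entirely on coupling the boundedness of $C$ with the normalization constraint $\E[e^{-C}]=1$ to show that the linear term $\E[C]$ is itself $O(\gamma^2)$; carefully tracking the Taylor remainder so that the linear and quadratic contributions assemble precisely into $2\gamma^2 l(l+1)$ is the delicate calculation, and is where I would concentrate the most effort.
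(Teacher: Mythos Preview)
Your proposal is correct and in fact more detailed than what the paper itself provides. The paper does not give a self-contained proof of this theorem; immediately after stating it, the text simply remarks that ``similar results are also used in PATE~\cite{papernot2017semi}'' and that both are ``based on the proof of~\cite{dwork2014foundations}.'' In other words, the paper's ``proof'' is a citation.

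Your plan reproduces precisely the argument those references supply: the $\ell_1$-sensitivity of the vote vector is $2\Delta f$ because a single changed training example affects at most one teacher, whose flipped vote moves one unit between two coordinates; the Laplace mechanism with scale $\Delta f/\gamma$ then gives $(2\gamma,0)$-DP after post-processing by $\argmax$; and the moment bound follows from the standard pure-DP-to-moments conversion (equivalently, $\varepsilon$-DP $\Rightarrow \tfrac12\varepsilon^2$-zCDP, hence $\alpha(l)\le \tfrac12\varepsilon^2 l(l+1)=2\gamma^2 l(l+1)$). The only remark is that what you flag as the ``hard part'' --- getting the $\gamma^2$ dependence with the exact constant $2$ --- is in fact the off-the-shelf lemma you already name (pure $\varepsilon$-DP implies $\tfrac12\varepsilon^2$-zCDP, due to Bun--Steinke / Dwork--Rothblum), so you need not redo the Taylor-expansion calculation yourself; citing it, as the paper does, is enough.
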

For each query $x$, we use the aggregation mechanism with noise $Lap(\Delta f/ \gamma)$ which is $(2\gamma, 0)$-DP. 
Thus over $T$ queries, we get $(4T\gamma^2 + 2\gamma\sqrt{2T\ln{\frac{1}{\delta}}}, \delta)$-differential privacy \citep{dwork2014foundations}.
In our approach, we can choose a very small $\gamma$ for each mechanism with each query $x$, which leads to very small privacy cost for each query and thus a low privacy budget. Overall, we cost near zero privacy budget while $\gamma \rightarrow 0$.
Note that, $\gamma$ is a very small number but is not exactly zero, and we can set $\gamma$ to be very small that would result in a very large noise scale but still smaller than the constant $c$ that we added in $\hat{\bv}$. 
Meanwhile, similar results are also used in PATE \citep{papernot2017semi}, but both our work and PATE is based on the proof of \citep{dwork2014foundations}. Note that, for neighboring databases $D$, $D'$, each teacher gets the same training data partition (that is, the same for the teacher with $D$ and with $D'$, not the same across teachers), with the exception of one teacher whose corresponding training data partition differs.

The Gaussian mechanism is based on Renyi differential privacy, and details have been discussed in~\citep{papernot2018scalable}.
Similar to the Laplacian mechanism, we also get near zero cost privacy budget overall due to setting a large $\sigma$ and an even larger constant $c$. 

In the following, we show the relations between constant $c$ with $\gamma$ and $c$ with $\sigma$ in two mechanism while $\Delta f = 1$ (or $\Delta_2 f = 1$)
We first recall the following basic facts about the Laplacian and Gaussian distributions: if $\zeta \sim Lap(1/\gamma)$ and $\xi \sim \mathcal{N}(0,\sigma^2)$, then for $c>0$,
\begin{align*}
    \Pr(|\zeta|\geq c) = e^{-\gamma c}
\end{align*}
and 
\begin{align*}
    \Pr(|\xi|\geq c) \leq 2e^{\frac{-c^2}{2\sigma^2}}.
\end{align*}
Now if each $|\zeta_j| < c$ (resp. $|\xi_j|<c$) for $j=1,...,L$, then the $\argmax$ will not change. We can apply a simple union bound to get an upper bound on the probability of these events.
\begin{align*}
    \Pr(\max_{j=1,...,L} |\zeta_j|\geq c) \leq Le^{-\gamma c}
\end{align*}
and
\begin{align*}
    \Pr(\max_{j=1,...,L} |\xi_j| \geq c) \leq 2Le^{\frac{-c^2}{2\sigma^2}}.
\end{align*}
Thus to obtain a failure probability at most $\tau$, in the Laplacian case we can take $c = \frac{1}{\gamma}\log(L/\tau)$, and in the Gaussian case we can take $c = \sqrt{2\sigma^2 \log(2L/\tau)}$.

\section{Experiments} \label{sec:exp}

In this section, we evaluate our proposed method along with previously proposed models.

\subsection{Experimental Setup}

We perform our experiments
on two widely used datasets on differential privacy: SVHN \cite{Netzer2011} and MNIST \cite{LeCun1998}.
MNIST and SVHN are two well-known digit image datasets consisting of 60K and 73K training samples, respectively.
We use the same data partition method and train the 250 teacher models as in \cite{papernot2017semi}.
In more detail, for MNIST, we use 10,000 samples as the student dataset,
and split it into 9,000 and 1,000 as a training and testing set for the experiment.
For SVHN, we use 26,032 samples as the student dataset, and split it into 10,000 and 16,032 as training and testing set.
For both MNIST and SVHN, the teacher uses the same network structure as in \cite{papernot2017semi}.

\begin{table*}[!tbh]
\centering
\resizebox{5.3in}{!}{%
\begin{tabular}{|c|l|c|c|c|c|c|}
\hline
\multirow{2}{*}{Dataset} & \multirow{2}{*}{Aggregator}                              & Queries  & Privacy & \multicolumn{3}{c|}{Accuracy}      \\  
                         &                                                          & answered & bound $\varepsilon$ & Student & Clean Votes & Ground Truth                 \\ \hline
\multirow{6}{*}{MNIST}   & LNMax  ($\gamma$=20) & 100      & 2.04    & 63.5\% & \multirow{5}{*}{94.5\%} & \multirow{6}{*}{98.1\%} \\ \cline{2-5}
                         & LNMax  ($\gamma$=20) & 1,000     & 8.03    & 89.8\% & &                         \\ \cline{2-5}
                         & LNMax  ($\gamma$=20) & 5,000     & $>$ 8.03       & 94.1\% & &                          \\ \cline{2-5}
                         & LNMax  ($\gamma$=20) & 9,000     &    $>$ 8.03      & 93.4\% &  &                        \\ \cline{2-5}
                         & NZC ($c=1e^{100}$, $\gamma=1e^{10}$)                                                     & 9,000     & $\approx$ 0       &  \textbf{95.1}\%       &                     &     \\ \cline{2-6} 
                         & NZC (5 teachers only)                                                     & 9,000     & $\approx$ 0       &  \textbf{97.8}\%       &  97.5\%                  &      \\ \hline
                         \hline
\multirow{6}{*}{SVHN}    & LNMax  ($\gamma$=20) & 500      & 5.04    & 54.0\% & \multirow{5}{*}{85.8\%} & \multirow{6}{*}{89.3\%} \\ \cline{2-5}
                         & LNMax  ($\gamma$=20) & 1,000     & 8.19    & 64.0\% &        &                  \\ \cline{2-5}
                         & LNMax  ($\gamma$=20) & 5,000     &  $>$ 8.19       & 79.5\% &        &                  \\ \cline{2-5}
                         & LNMax  ($\gamma$=20) & 10,000    &  $>$ 8.19       & 84.6\% &        &                  \\ \cline{2-5}
                         & NZC ($c=1e^{100}$, $\gamma=1e^{10}$)                                                      & 10,000    & $\approx$ 0       & \textbf{85.7}\% &               &           \\ \cline{2-6}
                         & NZC (5 teachers only)                                                      & 10,000    & $\approx$ 0       & \textbf{87.1}\% &   87.1\%       &               \\ \hline
\end{tabular}
}
\caption{Classification accuracy and privacy of the students. 
LNMax refers to the method from ~\cite{papernot2017semi}. The number of teachers is set to 250 unless otherwise mentioned. We set $\delta = 10^{-5}$ to compute values of $\varepsilon$ (to the exception of SVHN where $\delta = 10^{-6}$). Clean votes refers to a student that are trained from the noiseless votes from all teachers. Ground truth refers to a student that are trained with ground truth query labels. }

\label{tab:baseline}
\end{table*}

\begin{table}[!tbh]
\centering
\resizebox{3.3in}{!}{%
\begin{tabular}{|c|c|c|c|c|c|c|}
\cline{1-3} \cline{5-7}
\multicolumn{3}{|c|}{MNIST} &  & \multicolumn{3}{c|}{SVHN}   \\ \cline{1-3} \cline{5-7} 
LNMax   & NZC     & Clean   &  & LNMax   & NZC     & Clean   \\ \cline{1-3} \cline{5-7} 
93.02\% & 94.33\% & 94.37\% &  & 87.11\% & 88.08\% & 88.06\% \\ \cline{1-3} \cline{5-7} 
\end{tabular}
}
\caption{Label accuracy of teacher ensembles when compared to the ground truth labels from various methods using 250 teachers. Clean denotes the aggregation without adding any noise perturbation.}
\label{tab:aggregation}
\end{table}

\subsection{Results on Teacher Ensembles}

We primarily compare with \cite{papernot2017semi}, which also employs a teacher-student framework and has demonstrated strong performance.
We did not compare with work from \cite{papernot2018scalable} because the improvements are more on the privacy budget and the improvement of student performance on tasks are marginal\footnote{The open source implementation given in \url{https://github.com/tensorflow/privacy} only generates table 2 in the original paper from \cite{papernot2018scalable}, which does not provide any model performance.}. We used implementation from the official Github\footnote{\url{https://github.com/tensorflow/privacy}}, however, we are unable to reproduce the semi-supervised results. Therefore, in the following, we compare the result under fully supervised setting for both approaches. 

The results on MNIST and SVHN datasets are shown in table~\ref{tab:baseline}. It is clear that the proposed approach achieves both better accuracy and much better privacy cost. In particular, the results are very close to the baseline results, where the student is trained by using the non-perturbed votes from the teacher ensembles. The main reason is that \system\ is more robust against the random perturbations for most of the queries, which helps the student to obtain better quality labels for training. We also achieved strong privacy cost, because our approach allows us to use a very large noise scale, as long as the constant $c$ is set to a proper large value. To check if the above intuition is true, we calculate the number of correctly labeled queries from the teacher ensembles, and the result is shown in  table~\ref{tab:aggregation}. It is quite clear that our approach is more robust against noise perturbation as compared to the previous approach.

\subsection{Parameter Analysis}

The number of teachers would  have a significant impact on the performance of the student, as the teachers were trained on non-overlapping split of the data. The more number of teachers, the less data a teacher has to train. This leads to less accurate individual teachers, and thus less likely to have correct vote for the query. As can be seen from Fig \ref{fig:stats}a, the performance of the teacher ensembles decreases as the number of teachers increases. This is more prominent for more challenging datasets (e.g. SVHN performance drops more significantly as compared to MNIST). We would like to note that, although the number of qualified samples increases as the number of teachers increase (see Fig~\ref{fig:stats}c), it is at the cost of increasing the wrongly labeled queries, since the total accuracy of teachers has decreased. Because of this, it is likely to result in worse student performance. However, the previous approach such as PATE~\citep{papernot2017semi} or Scale PATE~\citep{papernot2018scalable} requires large number of teachers due to privacy budget constraints. Our approach does not have this limitation. Therefore, we experimented with fewer number of teachers and the results are shown in Table~\ref{tab:aggregation}. The results from using less teachers improved significantly, and approaches closer to the performance when the training student with the ground truth. 


\begin{table}[!tbh]
\begin{tabular}{|c|c|c|c|c|c|c|c|c|c|c|}
\hline
(\%)     & 1      & 5     & 10    & 25    & 50    & 100   & 250   \\ \hline
MNIST & 98.99  & 98.31 & 96.71 & 95.03 & 91.94 & 91.45 & 81.18 \\ \hline
SVHN  & 93.99  & 93.21 & 91.2  & 88.93 & 85.78 & 82.7  & 75.93 \\ \hline
\end{tabular}
\caption{Average label accuracy of teacher models when compared to the ground truth labels from various methods using 250 teachers.}
\label{tab:avg}
\end{table}


\begin{figure*}[tb]
\centering
\subfloat[Number of teachers versus Performance]{\includegraphics[width=2.1in, height=1.7in]{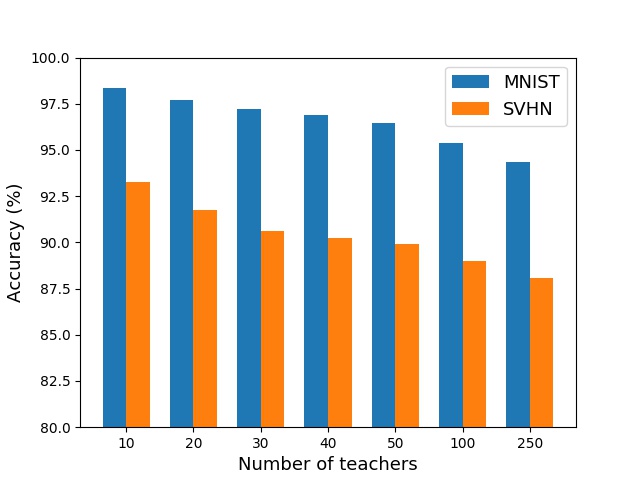}}\quad
\subfloat[Distance-n versus Qualified Samples with 250 Teachers]{\includegraphics[width=2.1in, height=1.8in]{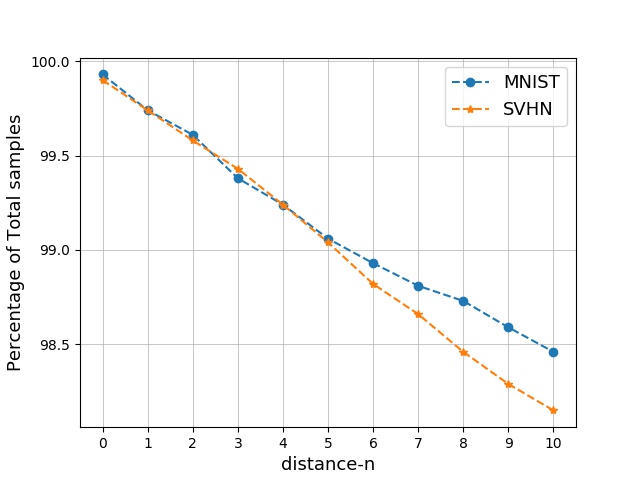}} \quad
\subfloat[Distance-3 Qualified Samples versus Number of Teachers ]{\includegraphics[width=2.1in, height=1.8in]{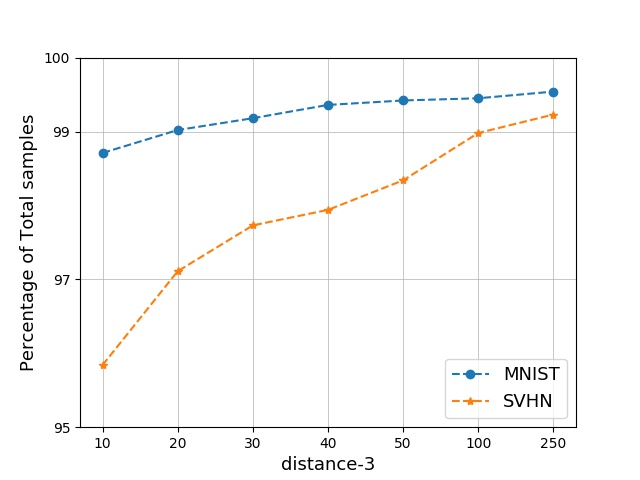}}
\caption{(a) shows the trade-off between number of teachers and the performance of the teacher ensemble; (b) shows the percentage of qualified sample which satisfy the distance-n in whole dataset when using 250 teachers; (c) shows the percentage of distance-3 qualified samples over the dataset.} \label{fig:stats}
\end{figure*}

\subsection{Discussion}

As can be observed from the results, benefiting from the proposed mechanism with smooth sensitivity,
\system\ can use much less number of teacher models, which leads to better performance of individual teacher models due to larger amount of training data available to each model.
A well-trained deep learning model requires much larger dataset scale when the task is more challenging (e.g. see \cite{sun2017revisiting,Papernot2017}).
In this case, our proposed \system\ offers a more appealing solution for more challenging real world applications that requires more data to train. As our approach allows one to obtain good performing student model as well as low privacy budget by using less number of teachers, which in turn leads to improved performance of individual teacher (see Table \ref{tab:avg}).
It is also interesting to note that, PATE can be viewed as a special case of our proposal, when we use distance-0 data transformer.

\section{Related Work} \label{sec:related}


Differential privacy is increasingly regarded as a standard privacy principle that guarantees provable privacy protection~\citep{beimel2014bounds}. Early work adopting differential privacy focus on restricted classifiers with convex loss~\citep{bassily2014differentially,chaudhuri2011differentially,hamm2016learning,pathak2010multiparty, song2013stochastic}.
Stochastic gradient descent with differentially private updates is first discussed in \cite{song2013stochastic}. The author starts to perturb each gradient update by random exponential noise. 
Then, \cite{abadi2016deep} proposed DP-SGD, a new optimizer by carefully adding random Gaussian noise into stochastic gradient descent for privacy-preserving for deep learning approaches. 
At each step of \DPSGD\, by given a set random of examples, it needs to compute the gradient,  clip the $l_2$ norm of each gradient, add random Gaussian noise for privacy protection,  and updates the model parameters based on the noisy gradient.
Meanwhile, \DPSGD\ proved the new moments account that gets the more precise privacy estimation. 

Intuitively, \DPSGD\ could be easily adopted with most existing deep neural network models built on the SGD optimizer. 
Based on \DPSGD\, \cite{agarwal2018cpsgd} applies differential privacy on
distributed stochastic gradient descent to achieve both communicate efficiency and privacy-preserving. \cite{mcmahan2017learning} applies differential privacy to LSTM language models by combining federated learning and differential private SGD to guarantee user-level privacy.

\cite{papernot2017semi} proposed a general approach by aggregation of teacher ensembles (\PATE) that uses the teacher models’ aggregate voting decisions to transfer the knowledge for student model training.
Our main framework is also inspired by PATE with a modification to the aggregation mechanism.
In order to solve the privacy issues,
\PATE\  adds carefully-calibrated Laplacian noise on the aggregate voting decisions between the communication.
To solve the scalability of the original \PATE\ model,
\cite{papernot2018scalable} proposed an advanced version of \PATE\ by optimizing the voting behaviors from teacher models with Gaussian noise.
PATE-GAN \cite{jordon2018pate} applies PATE to GANs to provide privacy guarantee for generate data over the original data. 
However, existing PATE or Scale PATE have spent much privacy budget and train lots of teacher models.
Our new approach overcomes these two limitations and achieved better performance on both accuracy and privacy budget.
Compared with \PATE\ and our model, \DPSGD\ is not a teacher-student model. 

\cite{nissim2007smooth} first proposed the smooth sensitivity and proof the DP guarantee under the data-dependent privacy analysis.
Then \cite{papernot2017semi,papernot2018scalable} use the similar idea to study use the data-dependent under different scenarios.
Compared with global sensitivity, the smooth sensitivity always shows more precise and accurate sensitivity which allows adding less noise perturbation per query.
Finally, data-dependent differential privacy can improve both performance and privacy cost in DP area. This is also our recommendation used on the proposed mechanism in this work.

\vspace{-10pt}
\section{Conclusion}\label{sec:con}
We propose a novel voting mechanism with smooth sensitivity -- the immutable noisy ArgMax, which enables stable output with tolerance to very large noise. 
Based on this mechanism, we propose a simple but effective method for differential privacy under the teacher-student framework using smooth sensitivity.
Our method benefits from the noise tolerance property of the immutable noisy ArgMax, which leads to near zero cost privacy budget.
Theoretically, we provide detailed privacy analysis for the proposed approach. Empirically, our method outperforms previous methods both in terms of accuracy and privacy budget.

\appendix
\section{Proofs}

\lemmab*
\begin{proof}
    First, let us recall some facts discussed in the main paper.
    We first recall the following basic facts about the Laplacian and Gaussian distributions: if $\zeta \sim Lap(1/\gamma)$ and $\xi \sim \mathcal{N}(0,\sigma^2)$, then for $c>0$,
\begin{align*}
    \Pr(|\zeta|\geq c) = e^{-\gamma c}
\end{align*}
and 
\begin{align*}
    \Pr(|\xi|\geq c) \leq 2e^{\frac{-c^2}{2\sigma^2}}.
\end{align*}
Now if each $|\zeta_j| < c$ (resp. $|\xi_j|<c$) for $j=1,...,L$, then the $\argmax$ will not change. We can apply a simple union bound to get an upper bound on the probability of these events.
\begin{align*}
    \Pr(\max_{j=1,...,L} |\zeta_j|\geq c) \leq Le^{-\gamma c}
\end{align*}
and
\begin{align*}
    \Pr(\max_{j=1,...,L} |\xi_j| \geq c) \leq 2Le^{\frac{-c^2}{2\sigma^2}}.
\end{align*}
Thus to obtain a failure probability at most $\tau$, in the Laplacian case we can take $c = \frac{1}{\gamma}\log(L/\tau)$, and in the Gaussian case we can take $c = \sqrt{2\sigma^2 \log(2L/\tau)}$.

    Since we have a sufficiently large constant $c$, $c >> \sqrt{2\sigma^2 \log(2L/\tau)}$ or $c >> \frac{1}{\gamma}\log(L/\tau)$, then $c$ minus any sampled noise from either Gaussian or Laplacian distribution is larger than 0 with $1-\tau$ probability, where we could set $\tau$ as a very small number which is close to 0.
    Then, the largest count of $g(D)$ adds a positive number which not change the argmax result.
\end{proof}

\lemmac*
\begin{proof}
    First, we have $\Delta f = 1$ (or $\Delta_2 f = 1$) and the function $g(D)$ is distance-2 data transformer.
    For any adjacent $D'$, $\argmax g(D) = \argmax g(D')$ is immutable,
    since $g(D')$ can only modify $1$ count due to the $\Delta f = 1$. However, the distance is larger than 2, then any modification of $g(D')$ would not change the argmax.
    Assume the argmax will be changed, let us use $g(D)_{j^*}$ presents the largest count and $g(D)_{j}$ presents the second largest count:
    \begin{align*}
        g(D)_{j^*} - 1 &< g(D)_{j} + 1, \\
        g(D)_{j^*} - & g(D)_{j} <  2,
    \end{align*}
    which is conflict the distance-2 of $g(D)$ for any cases.
    Then we prove that $g(D)$ and $g(D')$ have the same argmax.
\end{proof}

\thmb*
\begin{proof}
    Given a dataset $D$, by using Lemma 2, the local sensitivity of all neighbouring data $D'$ is 1, while $g(D')$ is distance-2 data transformer for all $D'$.
    Apparently, it requires the $g(D)$ is a distance-3 data transformer to ensure the upper bound of smooth sensitivity of $g(D')$ is 1 for all $D'$.
    
    By using Lemma 1, we can see that after adding a sufficiently large count and noise perturbation will also not change the argmax information for both $g(D)$ and $g(D')$.
    Then, we have the same argmax return over any $D$, $D'$ and DP also holds.
\end{proof}

\bibliographystyle{plain}
\bibliography{acmart.bib}


\end{document}